\documentclass[conference]{IEEEtran}

\usepackage{cite}
\usepackage{amsmath,amssymb,amsfonts}
\usepackage{algorithmic}
\usepackage{graphicx}
\usepackage{textcomp}
\usepackage{xcolor}
\usepackage{amsthm}
\usepackage{url}
\newtheorem{theorem}{Theorem}

\def\BibTeX{{\rm B\kern-.05em{\sc i\kern-.025em b}\kern-.08em
    T\kern-.1667em\lower.7ex\hbox{E}\kern-.125emX}}
\begin{document}

\title{Accelerating Skill Assessment in Chess:\\ A Drift-Diffusion-Enhanced Elo Rating System
\thanks{\copyright~2026 IEEE. Personal use of this material is permitted.
Permission from IEEE must be obtained for all other uses, in any current
or future media, including reprinting/republishing this material for
advertising or promotional purposes, creating new collective works, for
resale or redistribution to servers or lists, or reuse of any copyrighted
component of this work in other works.}
}

\author{\IEEEauthorblockN{Tianyuan Zhou\textsuperscript{*}}
\IEEEauthorblockA{
\textit{School of Intelligent Software}\\
\textit{and Engineering} \\
\textit{Nanjing University}\\
Suzhou, China\\
tyzhou@smail.nju.edu.cn}
\and
\IEEEauthorblockN{Zhizheng Fu\textsuperscript{*}}
\IEEEauthorblockA{
\textit{School of Intelligence Science}\\
\textit{and Technology} \\
\textit{Nanjing University}\\
Suzhou, China\\
zhizheng.fu@smail.nju.edu.cn}
\and
\IEEEauthorblockN{Tianming Yang}
\IEEEauthorblockA{
\textit{Center for Excellence in Brain Science}\\
\textit{and Intelligence Technology} \\ 
\textit{Institute of Neuroscience} \\
\textit{Chinese Academy of Sciences}\\
Shanghai, China \\
tyang@ion.ac.cn}

\thanks{\textsuperscript{*}These authors contributed equally to this work.}
}

\IEEEoverridecommandlockouts

\IEEEpubid{\makebox[\columnwidth]{979-8-3315-9476-3/26/\$31.00~\copyright2026 IEEE\hfill}
\hspace{\columnsep}\makebox[\columnwidth]{}}

\maketitle
\IEEEpubidadjcol

\begin{abstract}
Rating systems such as Elo serve as the gold standard for matchmaking in competitive chess. However, they inherently suffer from response lag due to their exclusive reliance on match outcomes, neglecting the granular quality of gameplay. Nevertheless, incorporating move-by-move information into rating adjustments presents a significant challenge given the substantial noise and the vastness of the game-state space. To address this, we propose the Drift-Diffusion-Enhanced Elo Rating System (DD-Elo), a novel skill assessment framework inspired by the drift diffusion model (DDM) from cognitive neuroscience. By modeling skill expression as a decision-making process, our model integrates move-level data to capture rapid skill fluctuations. We provide a rigorous mathematical derivation proving that DD-Elo maintains a bounded deviation from the traditional Elo system, ensuring theoretical alignment. Extensive experiments demonstrate that DD-Elo adapts to skill changes faster than Elo. Our findings suggest that DD-Elo offers an explainable, highly responsive, and backward-compatible solution for chess rating ecosystems. The implementation code is publicly available at \url{https://github.com/Aquila-zhou1/DD-Elo}.
\end{abstract}

\begin{IEEEkeywords}
chess rating, elo, drift-diffusion model, move-level analysis, predictive rating.
\end{IEEEkeywords}

\section{Introduction}

Accurate skill assessment is a cornerstone of competitive chess, underpinning matchmaking quality, ranking fairness, and long-term player engagement. For decades, the Elo rating system has served as the de facto standard for estimating player strength in chess and other competitive games\cite{glickman1995comprehensive,1971149384795592101}. Despite its widespread adoption and conceptual elegance, Elo mostly relies on match outcomes, updating player ratings based solely on win–loss–draw results. While this outcome-centric design ensures robustness and interpretability, it introduces a well-known limitation: slow responsiveness to changes in player skill, particularly in non-stationary settings \cite{glickman1993paired,maitra2025empirical}. This limitation becomes especially salient in practical scenarios such as new player calibration, returning veterans regaining form, or players undergoing intensive training. 

Existing rating system extensions partly address this problem. For example, Glicko\cite{glickman1995glicko} introduces uncertainty estimates to accelerate convergence, while TrueSkill\cite{herbrich2006trueskill,minka2018trueskill} employs Bayesian inference to handle sparse and evolving data. Despite these methodological advances, such systems remain fundamentally outcome-centric: rating updates are triggered only at the granularity of completed matches. Consequently, when the underlying skill of a player changes, the response speed of these models is inherently limited by the frequency of observed games, leading to an unavoidable match-level adaptation lag.

\begin{figure}
    \centering
    \includegraphics[width=\linewidth]{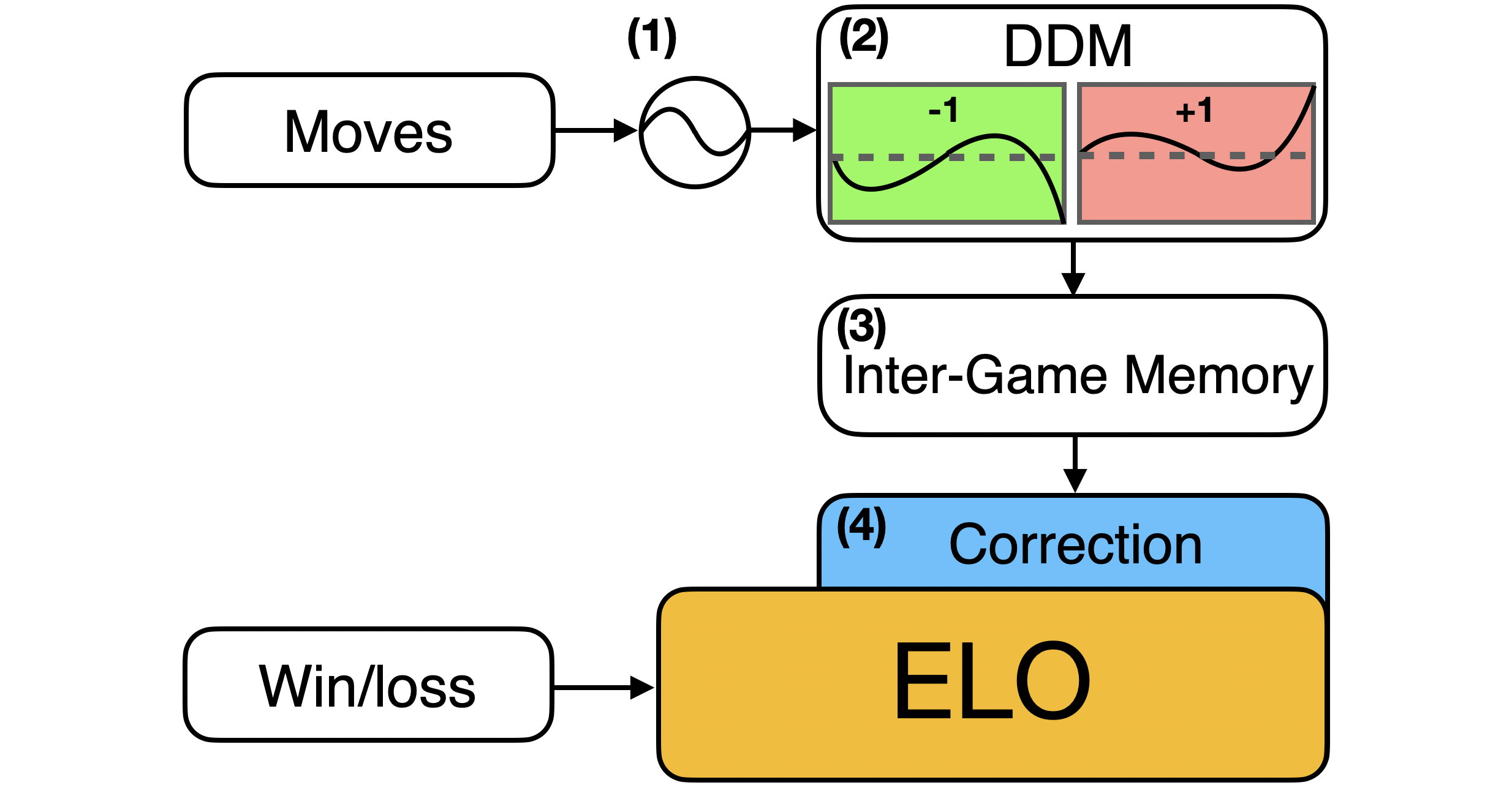}
    \caption{(1) Each move is converted into drift rates and fed into the drift diffusion model. (2) The DDM accumulates evidence for each player to determine whether their moves overperform or underperform relative to their current Elo rating. (3) The decision is then propagated through the inter-game memory mechanism to form a correction. (4) The final result is a combination of the correction term and the traditional win/loss Elo adjustment.}
    \label{fig:struct}
\end{figure}

However, chess is a sequential decision-making process composed of a large number of moves, each reflecting a player’s judgment under uncertainty. Prior research has demonstrated that move-level performance metrics, particularly engine-based evaluations, are strongly correlated with player skill\cite{leite2023expected,guid2006computer}. Analyses of world-class games show that stronger players consistently produce moves closer to engine-optimal choices, and centipawn loss has been established as a quantitative indicator of decision quality across skill levels\cite{leite2023expected}. Engine-based evaluation frameworks, exemplified by Deep Blue\cite{campbell2002deep} and its successors, further reinforce the validity of such fine-grained performance measures.

Despite their undeniable precision in quantifying decision quality, integrating these move-level indicators into a coherent rating system is far from straightforward. Individual moves are highly state-dependent, embedded in an enormous combinatorial game tree, and contaminated by significant stochasticity. Theoretical and empirical studies in sequential decision-making highlight the difficulty of extracting stable signals from such high-dimensional processes\cite{omori2024chess}. As a result, directly aggregating move-level metrics risks amplifying noise and destabilizing long-term ratings.

In this work, we bridge this gap by introducing the Drift-Diffusion-Enhanced Elo Rating System (DD-Elo), a rating framework inspired by the drift diffusion model (DDM) from cognitive neuroscience \cite{bogacz2006physics, ratcliff2008diffusion}. An overview of the framework is shown in Fig.~\ref{fig:struct}. DD-Elo reinterprets a chess game as a sequence of micro-decisions in which each move provides incremental evidence about a player’s latent skill. Move-level performance signals induce a within-game diffusion process whose accumulated state modulates the magnitude and direction of rating updates. This design allows ratings to adapt rapidly when consistent high- or low-quality play is observed, while ensuring long-term stability as shown in Fig.~\ref{fig:intro}.

Beyond empirical performance, theoretical alignment with Elo is also a central design objective. Elo and its variants can be viewed as special cases of paired comparison models rooted in the Bradley–Terry framework\cite{davidson1970extending}. Building on dynamic extensions of these models that allow time‑varying skills and uncertainty estimates\cite{glickman1999parameter,coulom2008whole}, we provide a rigorous theoretical analysis showing that DD-Elo maintains a bounded deviation from classical Elo under mild assumptions. Additionally, recent work analyzing Elo as an online estimator under Bradley–Terry and related models further supports the theoretical soundness of such approaches\cite{olesker2024analysis}. This guarantee ensures that DD‑Elo remains compatible with existing chess rating ecosystems and preserves the interpretability of Elo scores.

In summary, DD-Elo is a chess skill rating system that integrates drift diffusion modeling and move-level decision evidence.  We establish a theoretical bounded-deviation guarantee that formally aligns DD-Elo with Elo. Through large-scale experiments on online chess data, we demonstrate that DD-Elo adapts faster to skill changes than traditional outcome-based rating systems while maintaining stable long-term rankings.

\begin{figure}
    \centering
    \includegraphics[width=\linewidth]{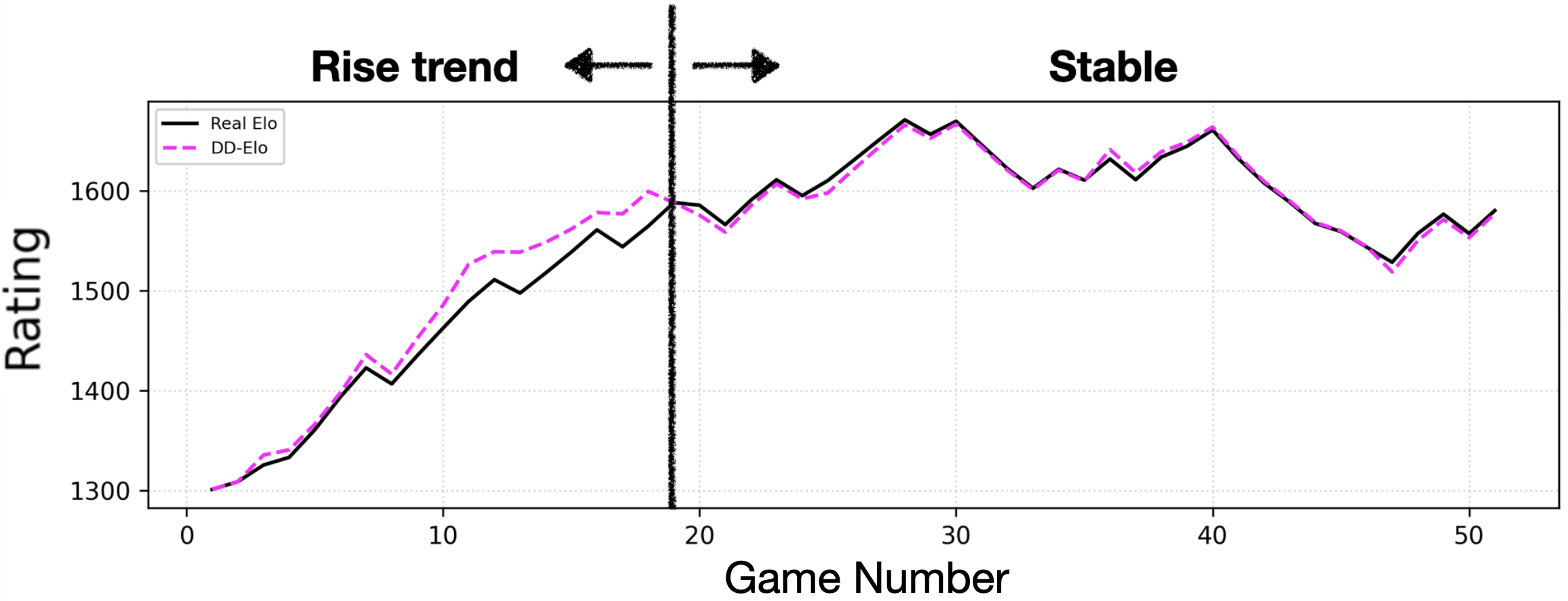}
    \caption{Elo and DD-Elo rating trajectories for a representative player. During a period of rapid skill improvement, DD-Elo (the pink dotted line) responds earlier and yields higher ratings than Elo (the black polyline). When the player’s skill stabilizes, the two trajectories converge, indicating improved responsiveness without sacrificing long-term consistency.}
    \label{fig:intro}
\end{figure}

\section{Related work}

\subsection{Elo-Based Rating Systems in Chess}

The Elo rating system remains the most influential and widely adopted framework for estimating player skill in chess\cite{glickman1995comprehensive,1971149384795592101}. Its success stems from conceptual simplicity, interpretability, and long-term stability. Numerous extensions have been proposed to address practical limitations of the original formulation. Chessmetrics incorporates temporal weighting to better reflect historical dominance and peak performance, while the German Evaluation Number (DWZ)\cite{DSB_DWZ} introduces age-dependent adjustments and performance-based modifiers to improve rating fairness for junior and senior players. Glickman \cite{glickman1995glicko,glickman2012example} further extends Elo by explicitly modeling rating uncertainty and volatility, enabling faster convergence when limited data are available.

\subsection{Drift Diffusion Models in Decision-Making}

The drift diffusion model (DDM) originated in cognitive neuroscience as a normative model of human decision-making under uncertainty\cite{bogacz2006physics,ratcliff2008diffusion,shadlen2006speed}. In its classical form, DDM describes how noisy evidence accumulates over time toward decision boundaries, capturing accuracy–speed trade-offs and trial-to-trial variability. Under standard assumptions, DDM represents a mathematically optimal decision policy. Bogacz et al. \cite{bogacz2006physics} proved that when the drift equals the expected log-likelihood ratio (LLR), DDM is equivalent to the Sequential Probability Ratio Test (SPRT). By Wald’s theorem, SPRT minimizes the expected sample size for given error probabilities, theoretically justifying our use of a discrete DDM to achieve optimal convergence speed. Due to its interpretability and strong empirical grounding, DDM has become a standard tool for modeling perceptual choice, economic decisions, learning, memory, and many other cognitive processes in both humans and animals, and its neural correlates have been revealed in the brain\cite{SHADLEN2013791,Kira2015}. 

\subsection{Move-Level Evaluation in Chess}

Move-level information has long been recognized as a rich source for estimating player skill. Early statistical approaches modeled move quality probabilistically in order to infer intrinsic strength from individual decisions rather than outcomes alone. For example, intrinsic chess rating models \cite{regan2011intrinsic} estimate player skill by measuring deviations from optimal play, while supervised learning methods, such as random forest models, have been used to predict Elo ratings from features extracted from move sequences within single games.

The rapid advancement of chess engines has further improved the reliability of move-level evaluation. Modern engines, from Deep Blue \cite{campbell2002deep} and Rybka \cite{secelle2007golden} to contemporary systems such as Stockfish \cite{stockfish}, provide increasingly accurate assessments of move quality, commonly expressed as centipawn loss. These signals are widely used for post-game analysis and training. However, existing applications primarily support retrospective analysis or static skill prediction and are not integrated into dynamic rating update mechanisms. In contrast, our approach incorporates centipawn-based move-level signals into a drift diffusion process, allowing such information to directly influence rating updates

\section{Preliminaries}

\subsection{Elo Rating System}

The Elo rating system models player skill as a scalar quantity updated after each match based on the difference between observed and expected outcomes. Let $R_t$ denote a player’s rating before a game, and let $S_t \in \{0, 0.5, 1\}$ represent the game outcome (loss, draw, win). The expected score $E_t$ against an opponent with rating $R_t^{\text{opp}}$ is given by
\begin{equation}
    E_t = \frac{1}{1 + 10^{(R_t^{\text{opp}} - R_t)/400}}
\end{equation}
The standard Elo update rule is
\begin{equation}
    R_{t+1} = R_t + K \cdot (S_t - E_t) \label{ELO_eq}
\end{equation}
where $K > 0$ is a fixed update coefficient. 

\subsection{Move-Level Performance Measure: Centipawn Loss}

To quantify move-level performance in chess, we adopt centipawn loss (CPL), a widely used metric derived from chess engine evaluations. Let $\text{Eval}(s)$ denote the engine evaluation of a board position $s$ measured in centipawns. For a given move $m$, the centipawn loss is defined as
\begin{equation}
\text{CPL}_m = \text{Eval}(s^*) - \text{Eval}(s_m)
\end{equation}
where $s^*$ is the position resulting from the engine-recommended best move, and $s_m$ is the position after executing move $m$.

\subsection{Drift Diffusion Model}

The drift diffusion model (DDM) models decision formation as the accumulation of noisy evidence over time, driven by an underlying drift signal and random fluctuations.

In discrete time, the evolution of the latent evidence state $X_m$ is described by
\begin{equation}
X_m = X_{m-1} + v_m + \epsilon_m
\end{equation}
where $X_0 = 0$, $v_m$ denotes the instantaneous evidence at time $m$, and $\epsilon_m \sim \mathcal{N}(0,\sigma_m^2)$ is the noise term capturing stochastic variability in evidence accumulation. 

The latent evidence $X_m$ evolves until it reaches one of the absorbing decision boundaries $\pm \beta$, at which point the process terminates at $\tau = \inf \left\{ m \ge 1 : |X_m| \ge \beta \right\}$. 

\subsection{Theoretical Optimality of DDM}

DDM provides a principled mechanism for aggregating noisy, sequential evidence into a single decision variable. Specifically, when the discrete drift $v_m$ equals the log-likelihood ratio of an observation $z_m$ under two hypotheses ($H_1, H_2$) given independent and identically distributed observations, the boundary accumulation mathematically reduces to the Sequential Probability Ratio Test (SPRT). Unlike fixed-sample tests, SPRT evaluates the joint likelihood ratio after each observation against preset thresholds to accept $H_1$, accept $H_2$, or continue sampling \cite{bogacz2006physics}:
\begin{equation}
\begin{aligned}
    X_T & = \sum_{m=1}^T v_m = \sum_{m=1}^T \log \frac{P(z_m \mid H_1)}{P(z_m \mid H_2)} \equiv \text{SPRT}
\end{aligned}
\end{equation}

SPRT has been proved to possesses strict optimality, minimizing the expected sample size required for a decision under any given error probabilities\cite{bogacz2006physics}. This exact mathematical equivalence guarantees that DDM minimizes the expected number of decision steps for any given error boundaries, rendering it a theoretically optimal decision policy.

\section{Algorithmic Design}

Our goal is to incorporate move-level performance signals into the rating update process while preserving stability and compatibility with classical Elo. We achieve this through four components: move-level drift construction, in-game diffusion, inter-game memory decay, and the final rating update rule.

\subsection{Move-Level Drift Construction}

We begin by defining how evidence for a player’s latent skill is computed from each individual move. Consider a game $t$ consisting of a sequence of moves indexed by $m$. For each move $m$, we define a move-level drift value $v_m$, which serves as the instantaneous evidence increment in the diffusion process.

We define
\begin{equation}
    v_m = f\left( \mathrm{CPL}_m - \mathcal{G}(R_t)\right) \cdot s_m \cdot \pi_m
\end{equation}
where $f(\cdot)$ is a monotonically decreasing function defined as
\begin{equation}
    f(t) = 10 - 10 \log\big(1 + 0.1t\big),
    \qquad t \ge 0
\end{equation}
which smoothly downweights large move errors while preserving sensitivity to small deviations.
The term $\mathcal{G}(\cdot)$ represents the expected centipawn loss of a player with rating $R_t$; $s_m \in \{+1, -1\}$ indicates whether the player is the active decision-maker at move $m$; $\pi_m = \min\big(E_t, E_t^{\mathrm{opp}}\big)$ represents the confidence of the move, defined as the minimum of the expected scores of the player and the opponent.

\subsection{In-Game Drift Diffusion Process with Decision Boundaries}

We model a player’s performance within a single game $t$ as a discrete-time drift diffusion process with absorbing boundaries. Let $X_{t_m}$ denote the latent evidence state after move $m$. The process evolves according to
\begin{equation}
X_{t_m} = X_{t_{m-1}} + v_{t_m}
\end{equation}
where $v_{t_m}$ represents the move-level drift, which contains stochastic variability in the player’s decision-making.

A decision occurs whenever the latent evidence reaches the predefined boundaries $\pm \beta$. Upon reaching a boundary, a reward $A$ is recorded.

Let $D_t$ denote the cumulative evidence extracted from all in-game decisions up to the end of the game:
\begin{equation}\label{eq:D_t}
D_t = \sum_{k=1}^{N_t} \mathrm{sign}(X_{\tau_k}) \cdot A 
\end{equation}

Here, $N_t$ denotes the number of boundary crossings up to time $t$, $\tau_k$ is the time of the $k$-th boundary crossing, and $X_{\tau_k}$ is the latent evidence at that moment. The function $\mathrm{sign}(\cdot)$ indicates the direction of the boundary crossing. 

\subsection{Inter-Game Memory Decay}

To prevent unbounded accumulation of evidence across games and to preserve long-term stability, we introduce an inter-game memory mechanism. Let $\Delta_t$ denote the long-term diffusion memory after game $t$. The memory is updated as
\begin{align}
\Delta_{t+1} &= \lambda \Delta_t + D_t\label{eq:memory_update} \\
X_{[t+1]_{0}} &= \lambda X_{t_m}\label{eq:residual_decay}
\end{align}
where $\lambda \in (0,1)$ is a decay factor controlling how quickly past evidence fades.

\subsection{DD-Elo Rating Update Rule}

Finally, DD-Elo modifies the standard Elo dynamics by introducing an additive  correction term. Let $R_{t+1}$ denote the post-game rating obtained
from the classical Elo update \eqref{ELO_eq}; the corrected rating is given by
\begin{equation}
    \tilde{R}_{t+1} = R_{t+1} + \Delta_{t+1}
\end{equation}

Here, the first term corresponds to the standard Elo adjustment, while the additional diffusion term acts as a performance-sensitive correction derived from move-level evidence. 

\section{Theoretical Analysis}\label{TA}

In this section, we establish the mathematical foundations of the DD-Elo model by analyzing its core theoretical properties. First, we prove that DD-Elo maintains a strictly bounded numerical deviation from the classical Elo system (Section \ref{th:A}). Building upon this boundedness property, we then demonstrate the theoretical alignment between the two systems, ensuring the global stability of the ranking structure (Section \ref{th:B}). Finally, we formally prove that the DDM architecture in DD-Elo represents the mathematically optimal decision policy for sequential rating updates (Section \ref{th:C}).

\subsection{Bounded Numerical Deviation}
\label{th:A}

Let $\Delta_t = \tilde R_t - R_t$ denote the difference between DD-Elo and classical Elo. According to (\ref{eq:D_t}), since each game contains a finite number of decisions, we have $|D_t| \le N_t |A| \le c$, where $c := N_{\max}|A|$.

\begin{theorem}[Bounded Deviation from Elo]
\label{thm:bounded_deviation}
If $|D_t| \le c$, then according to (\ref{eq:memory_update}), the deviation remains uniformly bounded:
\[
|\Delta_t| \le C_0,
\qquad
C_0 := \frac{c}{1-\lambda}
\]
Consequently, $|\tilde R_t - R_t| \le C_0$ for all t.
\end{theorem}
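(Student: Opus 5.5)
The argument is a direct induction on the linear recursion (\ref{eq:memory_update}), $\Delta_{t+1} = \lambda \Delta_t + D_t$, using only $|D_t| \le c$ and $\lambda \in (0,1)$. I will first fix the initial condition. Before any game the DD-Elo rating coincides with Elo, so $\Delta_0 = 0$. More generally it suffices that $|\Delta_0| \le C_0$, and the induction below is written to cover that case.

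\textbf{Step 1 (closed form).} Unrolling the recursion gives
\[
\Delta_t = \lambda^t \Delta_0 + \sum_{s=0}^{t-1} \lambda^{t-1-s} D_s .
\]
The triangle inequality and $|D_s| \le c$ then yield
\[
|\Delta_t| \le \lambda^t |\Delta_0| + c\sum_{j=0}^{t-1}\lambda^j = \lambda^t|\Delta_0| + c\,\frac{1-\lambda^t}{1-\lambda}.
\]
With $\Delta_0=0$ this is at most $c/(1-\lambda) = C_0$, because the partial geometric sum is dominated by the full series, which converges since $0<\lambda<1$.

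\textbf{Step 2 (invariance, as an alternative).} Equivalently, I will show that the interval $[-C_0, C_0]$ is invariant under the update. Suppose $|\Delta_t| \le C_0$. Then
\[
|\Delta_{t+1}| \le \lambda C_0 + c = \frac{\lambda c + c(1-\lambda)}{1-\lambda} = C_0 .
\]
Induction on $t$ then gives the bound for all $t$, starting from any $|\Delta_0|\le C_0$.

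\textbf{Step 3 (conclusion).} By the definition $\tilde R_{t+1} = R_{t+1} + \Delta_{t+1}$, we have $\tilde R_t - R_t = \Delta_t$ identically. Hence $|\tilde R_t - R_t| \le C_0$ for all $t$.

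There is no real analytic obstacle here. The only delicate point is checking that the hypothesis $|D_t|\le c$ is legitimate, that is, that $N_t \le N_{\max}$ uniformly across games. This rests on games having bounded length, or on a cap on the number of boundary crossings per game, and I would state it explicitly as an assumption. The residual reset (\ref{eq:residual_decay}) does not enter the argument, since it affects only how $D_t$ arises and not the bound on it.
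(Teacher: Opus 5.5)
Your proof is correct and follows the paper's argument: unroll $\Delta_{t+1}=\lambda\Delta_t+D_t$, apply the triangle inequality, and bound the geometric sum by $c/(1-\lambda)$. Your explicit initial condition $|\Delta_0|\le C_0$ (e.g.\ $\Delta_0=0$), together with the invariance step $\lambda C_0+c=C_0$, is what actually makes the bound hold for every $t$; the paper's closing ``take $t\to\infty$'' step on its own only yields the bound asymptotically.
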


\begin{proof}
From the update rule (\ref{eq:memory_update}) we have $\Delta_{t+1} = \lambda \Delta_t + D_t$. Using $|D_t|\le c$ gives $|\Delta_{t+1}| \le \lambda |\Delta_t| + c$. Iterating this recursion yields
\[
|\Delta_t|
\le
\lambda^t |\Delta_0|
+
c\sum_{k=0}^{t-1}\lambda^k
=
\lambda^t |\Delta_0|
+
\frac{c(1-\lambda^t)}{1-\lambda}
\]
Taking $t\to\infty$ implies $|\Delta_t| \le \frac{c}{1-\lambda}$.
\end{proof}

\subsection{Spearman Rank Robustness under Bounded Perturbations}
\label{th:B}

We analyze how a bounded perturbation of player ratings affects the
global ranking structure measured by Spearman's rank correlation.

\begin{theorem}[Spearman Robustness]
Let $R_1,\dots,R_n$ denote the Elo ratings of $n$ players and suppose the
DD-Elo ratings satisfy
\[
\tilde R_i = R_i + \delta_i,
\qquad
|\delta_i| \le C_0
\]
for all $i=1,\dots,n$.
Let $\rho_S$ denote the Spearman rank correlation coefficient between
$\{R_i\}$ and $\{\tilde R_i\}$.
Assume that the rating distribution admits a density function
$f_R$ satisfying $f_R(x)\le M$.
Then
\[
1-\rho_S
=
O\!\left(\frac{C_0^3}{\sigma_R^3}\right)
\]
where $\sigma_R$ denotes the standard deviation of the rating distribution.
\end{theorem}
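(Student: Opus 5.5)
We plan to work entirely from the rank-difference form of Spearman's coefficient. Let $r_i$ and $\tilde r_i$ denote the ranks of player $i$ under $\{R_i\}$ and $\{\tilde R_i\}$, and let $d_i=\tilde r_i-r_i$. In the absence of ties,
\[
1-\rho_S=\frac{6\sum_{i=1}^n d_i^2}{n(n^2-1)}.
\]
The whole argument therefore reduces to bounding $\sum_i d_i^2$ using only the perturbation bound $|\delta_i|\le C_0$ (which Theorem~\ref{thm:bounded_deviation} supplies for $\delta_i=\Delta$ of player $i$) together with the density bound $f_R\le M$.

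\textbf{Step 1: local inversions only.} A pair $(i,j)$ with $R_i<R_j$ can be inverted by the perturbation only if $\tilde R_i>\tilde R_j$. Since $\delta_i-\delta_j\le 2C_0$, this requires $R_j-R_i\le 2C_0$. We then rewrite $\sum_i d_i^2$ as a sum over inverted pairs, using the permutation identity $\sum_i(\pi(i)-i)^2=2\sum_{\text{inversions}}(\text{rank gap})$. This identity is checked on adjacent transpositions and extended by induction on inversion count.

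\textbf{Step 2: counting via the density.} Treat the ratings as a sample from $f_R$. The expected number of players in any window of width $2C_0$ is then at most $2nC_0M$. Consequently:
\begin{itemize}
\item each player participates in at most $O(nC_0M)$ inversions;
\item each inversion has rank gap at most $O(nC_0M)$.
\end{itemize}
Concentration, via a Chernoff bound on window counts plus a union bound over $O(n)$ windows, turns these expectations into high-probability statements.

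Finally, we normalize by scale. Rescaling the ratings to unit variance replaces $C_0$ by $C_0/\sigma_R$ and $M$ by $M\sigma_R$. Under the standing assumption that $M\sigma_R=O(1)$, which is the natural reading of a bounded-density hypothesis for a fixed-shape rating distribution, each factor $C_0M$ becomes $O(C_0/\sigma_R)$.

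\textbf{Step 3: the extra power (main obstacle).} The crude product of Steps 1--2 gives $n\cdot n C_0M\cdot nC_0M$ in the numerator, hence only $O(C_0^2/\sigma_R^2)$. To reach the stated cubic rate, a third factor of $C_0/\sigma_R$ must be extracted. My first attempt would weight each pair at distance $u\le 2C_0$ by the fraction of perturbation configurations that actually invert it, namely those with $\delta_i-\delta_j>u$, and integrate over $u$ against the local density. The hope is that the inversion-weighted gap sum picks up an additional integral $\int_0^{2C_0}(\cdot)\,du/\sigma_R$.

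I expect this step to be the genuine difficulty. Under the bare hypotheses $|\delta_i|\le C_0$ and $f_R\le M$, an adversarial choice such as $\delta_i=\pm C_0$ alternating in rank order seems to saturate the quadratic bound. The cubic rate would then require exploiting additional structure of the DD-Elo perturbations that is not stated in the theorem. If that structure cannot be justified from the paper's setup, this step, and hence the cubic rate as worded, would not go through.
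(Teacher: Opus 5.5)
\textbf{The gap in your Step 3 is real, and it cannot be closed: the cubic rate is false under the stated hypotheses.}

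Your Steps 1--2 reproduce the skeleton of the paper's proof. Inversions occur only among pairs with $|R_i-R_j|\le 2C_0$. Each player has $O(nMC_0)$ such neighbors, so $|d_i|=O(nMC_0)$. Finally, $M=O(1/\sigma_R)$ turns $MC_0$ into $C_0/\sigma_R$. You are right to treat that last relation as an extra hypothesis. The paper simply asserts it, whereas $f_R\le M$ only gives the reverse inequality $M\sigma_R\ge 1/\sqrt{12}$.

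The paper gets its third factor exactly where you get stuck. It writes $\sum_i d_i^2=4MC_0n\cdot O(n^2M^2C_0^2)$, i.e.\ it multiplies the per-player bound by ``$4MC_0 n$ players involved in possible inversions.'' That step is invalid. $4MC_0$ bounds the probability that a given \emph{pair} is within $2C_0$, not the fraction of \emph{players} having such a neighbor. Once $nMC_0\gtrsim 1$, essentially every player has $\Theta(nMC_0)$ neighbors in its window and can be displaced by that much. So the argument honestly yields only the quadratic bound $1-\rho_S=O(M^2C_0^2)$ that you obtain, up to lower-order terms when $nMC_0$ is not large.

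Your adversarial example shows the cubic rate is false, not merely hard to prove. Take $R_k=kL/n$, i.e.\ uniform on $[0,L]$, so $M=1/L$ and $\sigma_R=L/\sqrt{12}$; even the extra assumption $M\sigma_R=O(1)$ holds. Set $\delta_k=(-1)^{k+1}C_0$ with $C_0\ll L$ but $nC_0/L\gg 1$. Odd-indexed players rise and even-indexed ones fall by about $nC_0/L$ ranks. Hence $\sum_k d_k^2\approx n^3C_0^2/L^2$ and $1-\rho_S\approx 6C_0^2/L^2=C_0^2/(2\sigma_R^2)$. This is not $O(C_0^3/\sigma_R^3)$ as $C_0/\sigma_R\to 0$.

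Your proposed averaging over perturbation configurations would not rescue the rate either. Even for i.i.d.\ $\delta_i$ uniform on $[-C_0,C_0]$, one has $\mathbb{E}[d_i\mid R_i,\delta_i]\approx nf_R(R_i)\delta_i$. Hence $1-\rho_S\approx 6\,\mathbb{E}[f_R(R)^2\delta^2]=2C_0^2\,\mathbb{E}[f_R(R)^2]$, again of order $C_0^2$.

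So the missing Step 3 cannot be supplied from the stated hypotheses. The defensible statement is $1-\rho_S=O(C_0^2/\sigma_R^2)$ under the added assumption $M\sigma_R=O(1)$. Your Steps 1--2, together with the concentration argument, essentially establish that.
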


\begin{proof}
From Theorem~\ref{thm:bounded_deviation}, we have
\[
|\tilde R_i-R_i|\le C_0
\]
Consequently,
\[
\tilde R_i \ge R_i-C_0,
\qquad
\tilde R_j \le R_j+C_0
\]
Therefore, if $|R_i-R_j|>2C_0$, the relative ordering of players $i$ and $j$ cannot change after the perturbation. Hence, rank inversions may occur only for pairs satisfying $|R_i-R_j|\le 2C_0$.

Since the rating distribution has bounded density $f_R(x)\le M$, for any
fixed player $i$,
\[
P\bigl(|R_i-R_j|\le 2C_0\bigr)
\le 4MC_0
\]
Thus only an $O(MC_0)$ fraction of player pairs are capable of producing
rank inversions.

The Spearman rank correlation coefficient is given by
\[
\rho_S
=
1-
\frac{6}{n(n^2-1)}
\sum_{i=1}^{n} d_i^2
\]
where $d_i$ denotes the difference between the ranks of player $i$
under the two rating systems.

For a given player, the expected number of nearby opponents whose order
may change is $O(nMC_0)$. Since these potentially inverted opponents are
contained within an interval of width $O(C_0)$, the corresponding rank
displacement satisfies $d_i = O(nMC_0)$. Combining this with the fraction $O(MC_0)$ of players involved in possible inversions yields
\[
\sum_{i=1}^{n} d_i^2
=
4MC_0\, n \cdot O\!\left(n^2M^2C_0^2\right)
=
O\!\left(M^3C_0^3n^3\right)
\]

Substituting this estimate into the definition of $\rho_S$ gives
\[
1-\rho_S
=
O\!\left(M^3C_0^3\right)
\]

Finally, the density scale is constrained by the spread of the rating
distribution. In particular, $M = O\!\left(\frac{1}{\sigma_R}\right)$.
Substituting this relation yields
\[
1-\rho_S
=
O\!\left(\frac{C_0^3}{\sigma_R^3}\right)
\]
which completes the proof.
\end{proof}

\subsection{Theoretical Optimality of the DD-Elo Model}
\label{th:C}

This section proves that, under specific rating error constraints, the drift-diffusion updating mechanism in DD-Elo minimizes the objective loss function with the fewest expected moves.

\subsubsection{Problem Formulation and Hypotheses}

Since the Elo rating system is symmetric and zero-sum, rating adjustments can be modeled as a two-alternative forced-choice (2AFC) problem between two hypotheses. Under $H_1$ (rating increase), the player is assumed to be underrated and the opponent overrated, leading to a lower expected player error ($\mu_{p1}$) and a higher one for the opponent ($\mu_{o1}$). Under $H_2$ (rating decrease), the player is overrated and the opponent underrated, yielding a higher player error $\mu_{p2}$ ($\mu_{p2}>\mu_{p1}$) and a lower opponent error $\mu_{o2}$ ($\mu_{o2}<\mu_{o1}$).

Let $R_{true}$ be the true rating, $\overline{R}$ be the estimated rating, and $N$ be the decision steps. DD-Elo aims to minimize the expected risk cost functional $\mathcal{J}$:
\begin{equation}\label{eq:loss}
    \min \mathcal{J} = \mathbb{E}[|\overline{R} - R_{true}|] + \delta \cdot \mathbb{E}[N]
\end{equation}
where $\delta > 0$ is the penalty coefficient for convergence delay.

\subsubsection{Joint Log-Likelihood Ratio under Exponential Distribution}

Given the non-negative and heavily right-skewed nature of CPL, we assume the single-move error follows an exponential distribution $CPL \sim \text{Exp}(\lambda)$, where the rate parameter $\lambda = 1/\mu$ and $\mu = \mathbb{E}(R)$.

Under $H_1$ and $H_2$, the optimal joint log-likelihood ratio (LLR) encompassing both players' performances is a linear combination of their respective CPLs.

\begin{proof}
The joint LLR is the log-sum of both players' independent observation probability ratios:
\begin{equation}
    LLR = \log \frac{P(CPL_p \mid H_2)}{P(CPL_p \mid H_1)} + 
    \log \frac{P(CPL_o \mid H_2)}{P(CPL_o \mid H_1)}
\end{equation}
Substituting the exponential probability density function $P(x)=\lambda e^{-\lambda x}$ yields
\begin{equation}
    LLR = C + (\lambda_{p1}-\lambda_{p2}) CPL_p 
    - (\lambda_{o1}-\lambda_{o2}) CPL_o
\end{equation}
Since $\mu_{p2}>\mu_{p1}$ implies $\lambda_{p1}>\lambda_{p2}$ and 
$\mu_{o2}<\mu_{o1}$ implies $\lambda_{o2}>\lambda_{o1}$, the coefficient of $CPL_p$ is positive while that of $CPL_o$ is negative.
\end{proof}

Motivated by this structure, the DD-Elo move-level feature is constructed as $v_m = \omega_p \Phi(CPL_p) - \omega_o \Phi(CPL_o)$ where $\Phi(CPL)=f(\max(CPL-E(R),0))$ extracts right-tail errors and $\omega$ denotes confidence weights. 
This formulation mirrors the positive and negative terms of the optimal LLR while using truncation to focus estimation on statistically informative blunders.

Crucially, as established in the Preliminaries, a DDM achieves theoretical optimality (via equivalence to SPRT) when its drift represents the LLR and its objective is to minimize a cost function balancing accuracy and decision time \cite{bogacz2006physics}. DD-Elo explicitly satisfies all these prerequisites: its accumulated evidence $v_m$ maps directly to the optimal joint LLR, and its threshold mechanism structurally minimizes the expected cost functional $\mathcal{J}$ defined in (\ref{eq:loss}). Therefore, the accumulation mechanism of DD-Elo is not merely a heuristic, but the mathematically optimal decision policy for dynamic rating updates.

\section{Experimental Setup}

\subsection{Dataset}
We conduct experiments on a large-scale real-world chess dataset from Lichess\cite{lichess}, consisting of approximately 10 million rated games played in January 2019, involving 429,000 active players. To ensure sufficient signal quality for skill estimation, we exclude Bullet and Blitz games and analyze only players who completed at least 100 rated games. In addition, we calculate the centipawn loss (CPL) using Stockfish\cite{stockfish} as the move-level performance measurement.

\subsection{Identification of Non-Stationary Phases}

A central challenge in evaluating rating systems is distinguishing genuine skill changes from stochastic fluctuations. Rather than analyzing entire rating histories indiscriminately, we explicitly identify non-stationary phases—periods during which a player’s underlying skill exhibits a sustained upward or downward trend.

\subsubsection{Signal Smoothing}

For each player’s Elo rating trajectory, we first apply a moving average filter with a window size of $W_s$ games, a hyperparameter controlling the smoothness of the trajectory:
\begin{equation}
    \tilde{\text{Elo}}_t = \frac{1}{W_s} \sum_{i=-\lfloor W_s/2 \rfloor}^{\lfloor W_s/2 \rfloor} \text{Elo}_{t+i}
\end{equation}
This step suppresses high-frequency noise while preserving medium-term trends relevant to skill evolution.

\subsubsection{Trend Signal Extraction}

To detect structural changes, we apply a first-order derivative filter with a window size of $W_b$:
\begin{equation}
    \text{Trend}_t = \sum_{i=-\lfloor W_b/2 \rfloor}^{\lfloor W_b/2 \rfloor} i \cdot \tilde{\text{Elo}}_{t+i}
\end{equation}
A non-stationary phase is identified when $|\text{Trend}_t| > \theta$. The rising or declining skill phases are classified by the sign of $\text{Trend}_t$.

\subsection{Evaluation Metrics}

To comprehensively assess fast adaptation performance, we adopt four complementary metrics, each capturing a distinct aspect of responsiveness and reliability.

\subsubsection{Area Improvement Percentage (AIP)}

Area Improvement Percentage (AIP) measures the fraction of rating modification area that is aligned with the underlying skill trend during a monotonic transition phase.

Let $\text{Elo}(t)$ and $\text{DD-Elo}(t)$ denote the rating trajectories over time. The modification introduced by DD-Elo relative to Elo is represented by the absolute area between the two curves over the phase. Given that the underlying skill exhibits a consistent trend, this total area can be decomposed into a trend-aligned (correct) area and a trend-opposed (incorrect) area according to the direction of the skill change.

Formally, let $Area_{\text{correct}}$ denote the absolute area where the DD-Elo modification is directionally consistent with the skill trend, and let $Area_{\text{total}}$ denote the total absolute modification area. AIP is defined as
\begin{equation}
    \text{AIP} = \frac{Area_{\text{correct}}}{Area_{\text{total}}} \times 100\%
\end{equation}

\begin{figure}
    \centering
    \includegraphics[width=\linewidth]{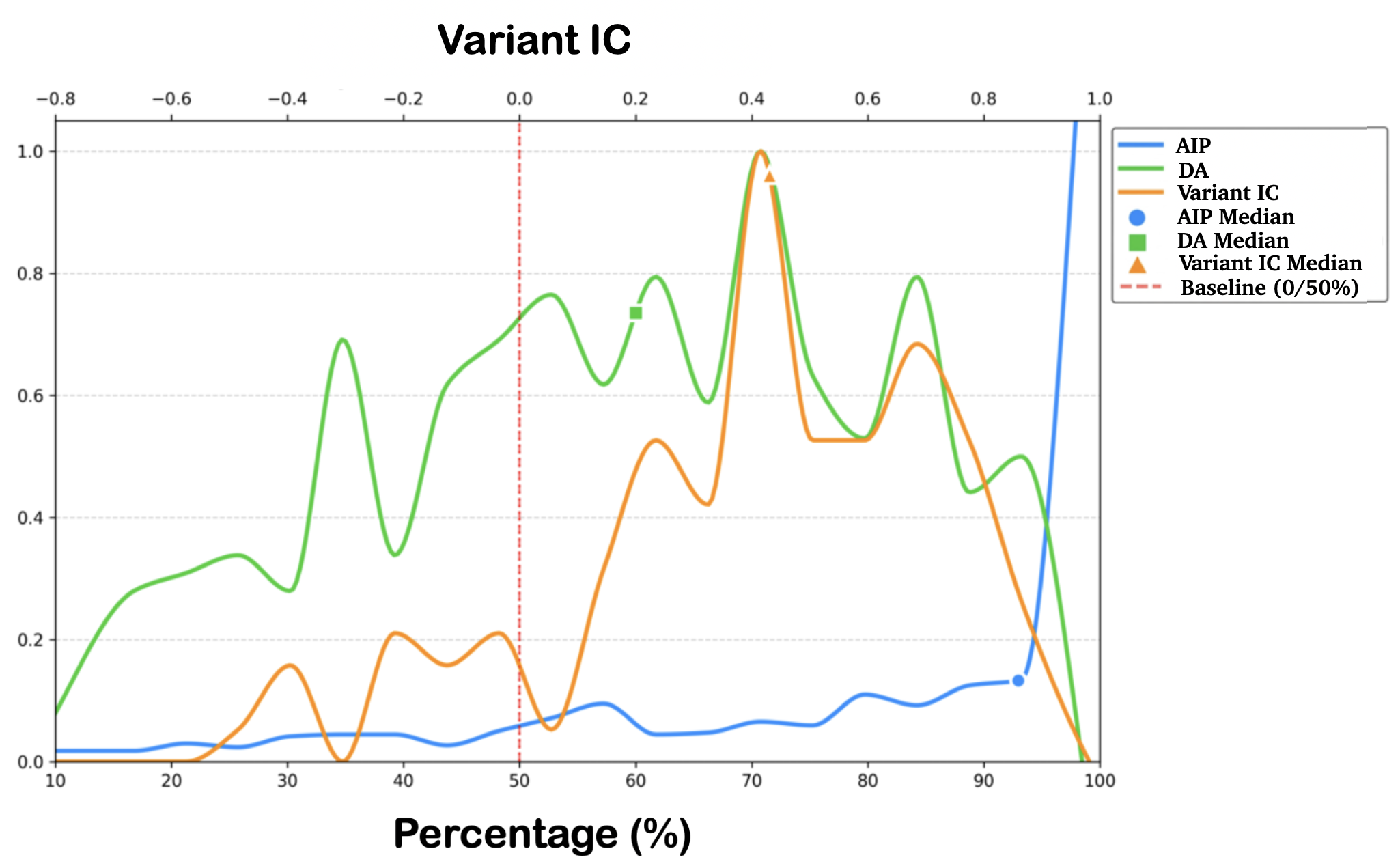}
    \caption{Distribution of Area Improvement Percentage (AIP), Directional Accuracy (DA), and Information Coefficient (IC), with sample counts normalized to have a peak value of 1.}
    \label{fig:sum}
\end{figure}

By construction, an AIP of $50\%$ corresponds to a non-informative baseline in which modifications are equally distributed between trend-aligned and trend-opposed directions. Values above $50\%$ indicate that a larger fraction of the modification area is concentrated in the correct direction, reflecting more effective adaptation, while values below $50\%$ indicate predominantly misaligned adjustments.

\subsubsection{Directional Accuracy (DA)}
In contrast to the area-based nature of AIP, Directional Accuracy (DA) evaluates directional correctness at a qualitative level, independent of modification magnitude.

Directional Accuracy evaluates whether the direction of DD-Elo corrections aligns with the detected trend signal at each time step:
\begin{equation}
    DA = \frac{1}{N} \sum_{t=1}^N \mathbb{I} \Big( \text{Sign}({\Delta_t}) = \text{Sign}({Trend_t}) \Big)
\end{equation}
where $\mathbb{I}(\cdot)$ is the indicator function. DA can be interpreted as the empirical probability that DD-Elo applies corrections in the correct direction. A value of $50\%$ corresponds to random directional agreement, while values above $50\%$ indicate that DD-Elo reliably identifies the direction of change, avoiding systematic counter-corrections during neutral or noisy periods.

\subsubsection{Average Lead Time (ALT)}

Beyond directional quality, Average Lead Time (ALT) explicitly  measures how many games earlier DD-Elo reaches the same rating levels as Elo.

Consider a transition phase with rating range $[E_{\min}, E_{\max}]$, partitioned into $K$ uniformly spaced thresholds
\[
\theta_k = E_{\min} + \frac{k}{K}(E_{\max} - E_{\min}), \quad k = 1, \dots, K
\]
For each threshold $\theta_k$, let $t_{\text{Elo}}^{\theta_k}$ and $t_{\text{DD-Elo}}^{\theta_k}$ denote the first game index at which the respective rating reaches $\theta_k$. ALT is defined as
\begin{equation}
    \text{ALT} = \frac{1}{K} \sum_{k=1}^K \left( t_{\text{Elo}}^{\theta_k} - t_{\text{DD-Elo}}^{\theta_k} \right)
\end{equation}

A positive ALT indicates that DD-Elo reaches the same rating levels earlier than Elo, directly quantifying the temporal lead in terms of the number of games. An ALT of zero corresponds to no temporal advantage.

\subsubsection{Information Coefficient (IC)}

All the metrics discussed previously evaluate the DD-Elo correction term ($\Delta$) with respect to a player's current skill level. A natural question then arises: can we further investigate the predictive capability of $\Delta$? 

To evaluate this relationship, we adopt the Information Coefficient (IC). Originally introduced in portfolio management \cite{ambachtsheer1974profit}, IC measures the correlation between predicted values and realized outcomes. It is defined as the sample Pearson correlation between a factor $X$ and the corresponding outcome $Y$:
\begin{equation}
IC = \frac{\sum_{i=1}^{n} (X_i - \bar{X})(Y_i - \bar{Y})}
{\sqrt{\sum_{i=1}^{n} (X_i - \bar{X})^2 \sum_{i=1}^{n} (Y_i - \bar{Y})^2}}
\end{equation}
where $\bar{X}$ and $\bar{Y}$ are the sample means of $X$ and $Y$.

In quantitative finance, an IC is considered statistically meaningful if its absolute value exceeds a certain threshold. Following the standards suggested in \cite{jung2025alpha}, we adopt $|IC| > 0.02$ as the threshold for significance. An IC value surpassing 0.02 implies that the factor contains significant predictive information regarding the outcome.

\textbf{Standard IC: Direct Prediction}\\
Our first application, denoted as Standard IC, applies the IC formula directly to rating changes. The DD-Elo correction term $\Delta_t$ is treated as the predictive factor ($X$), while the realized outcome ($Y$) is the player's Elo change over the next $k$ games, i.e., $Y = Elo_{t+k} - Elo_t$. A positive Standard IC indicates that $\Delta$ has predictive power for future rating movements.

\textbf{Variant IC: Trend Prediction}\\
Short-term Elo changes are highly stochastic, which can suppress the Standard IC. To better reflect underlying skill dynamics, we introduce a Variant IC that compares $\Delta_t$ with a denoised skill trend (\textit{Signal}) obtained during data processing. In this case the factor remains $X=\Delta_t$, while the outcome is the Signal value representing the direction of the player's skill evolution.

\begin{figure}
    \centering
    \includegraphics[width=0.985\linewidth]{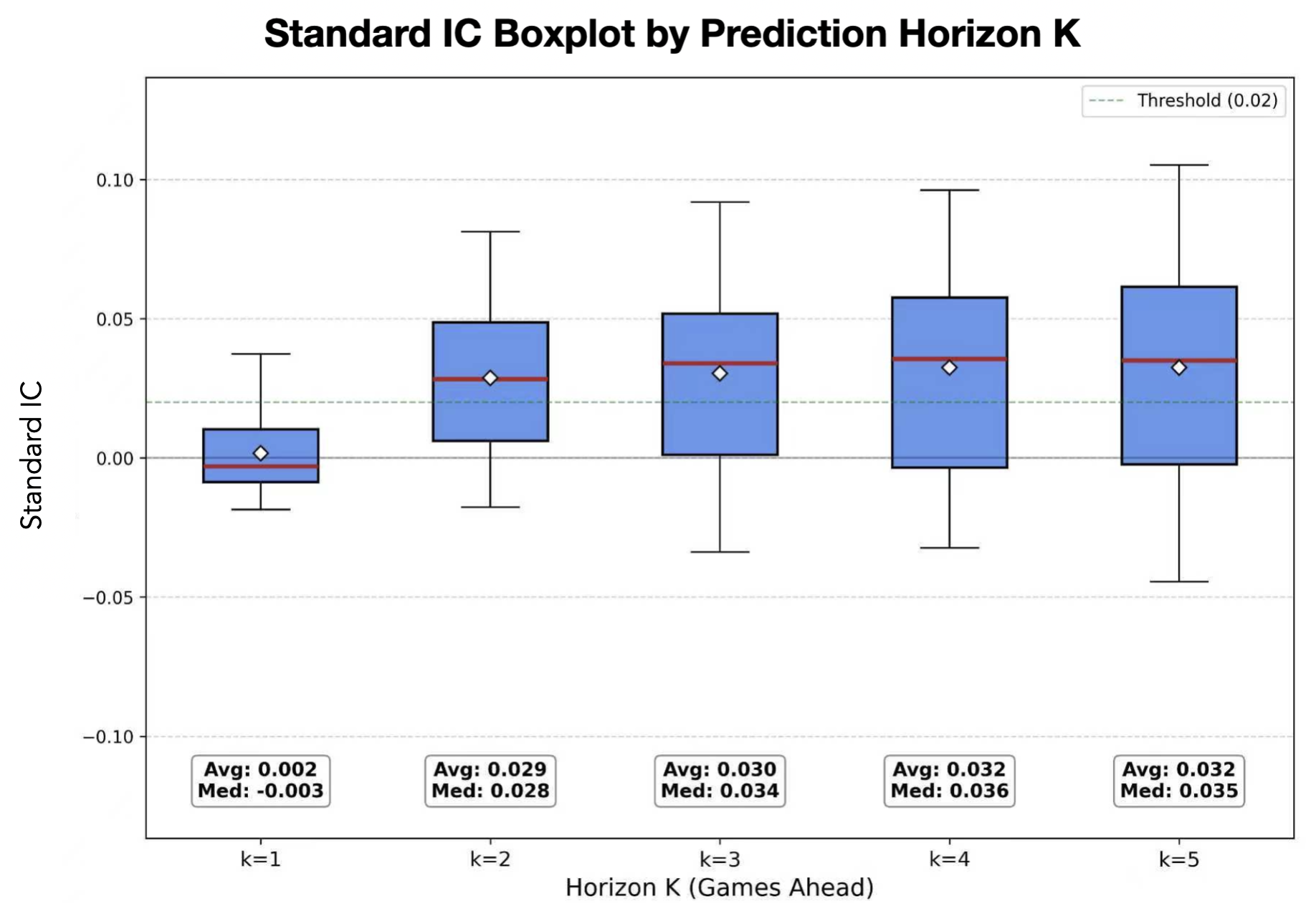}
    \caption{The result of Standard IC}
    \label{fig:ick}
\end{figure}

\section{Results}

\subsection{Performance on Trend Alignment and Signal Correlation}

We first evaluate the effectiveness of the DD-Elo correction mechanism ($\Delta$) in capturing the true skill evolution trend using Area Improvement Percentage (AIP), Directional Accuracy (DA), and Variant IC. The distributions of these metrics across the test segments are presented in Fig.~\ref{fig:sum}.

\subsubsection{Area Improvement Percentage (AIP)}

As shown in Fig.~\ref{fig:sum}, the AIP metric (blue) demonstrates robust performance with a distribution heavily skewed toward 100\%, with a mean value of $74.04\%$ and a median of $88.94\%$. The $50\%$ baseline represents random modification. This indicates that the vast majority of the modification magnitude (in terms of area) is applied in the direction consistent with the player's underlying skill trend. Effectively, DD-Elo maximizes the constructive adjustments while minimizing counterproductive corrections.

\subsubsection{Directional Accuracy (DA)}

The distribution of Directional Accuracy (green) shown in Fig.~\ref{fig:sum} has a mean of $0.534$ and a median of $0.571$, both exceeding the random baseline of $0.5$. This indicates that the directional adjustments produced by DD-Elo are better than random guessing, although the margin is relatively modest due to the substantial noise inherent in individual games. Importantly, the DA results should be interpreted together with the AIP metric. Although some updates are assigned an incorrect direction, the strong AIP performance suggests that these errors are generally small in magnitude and are outweighed by the accumulated effect of correctly directed adjustments. Furthermore, the memory-decay mechanism prevents erroneous directional decisions from exerting a persistent influence, allowing subsequent correct evidence to gradually dominate the overall assessment.

\subsubsection{Variant IC}

Finally, we compute the Variant IC (yellow) to assess the correlation between our correction term and the denoised skill trend in Fig. \ref{fig:sum} . With a mean Variant IC of $0.36$ and median of $0.42$,  the results are an order of magnitude higher than the standard industry significance threshold of $0.02$. This correlation confirms that $\Delta$ is driven by structural changes in a player's performance level.

\begin{figure}
    \centering
    \includegraphics[width=0.93\linewidth]{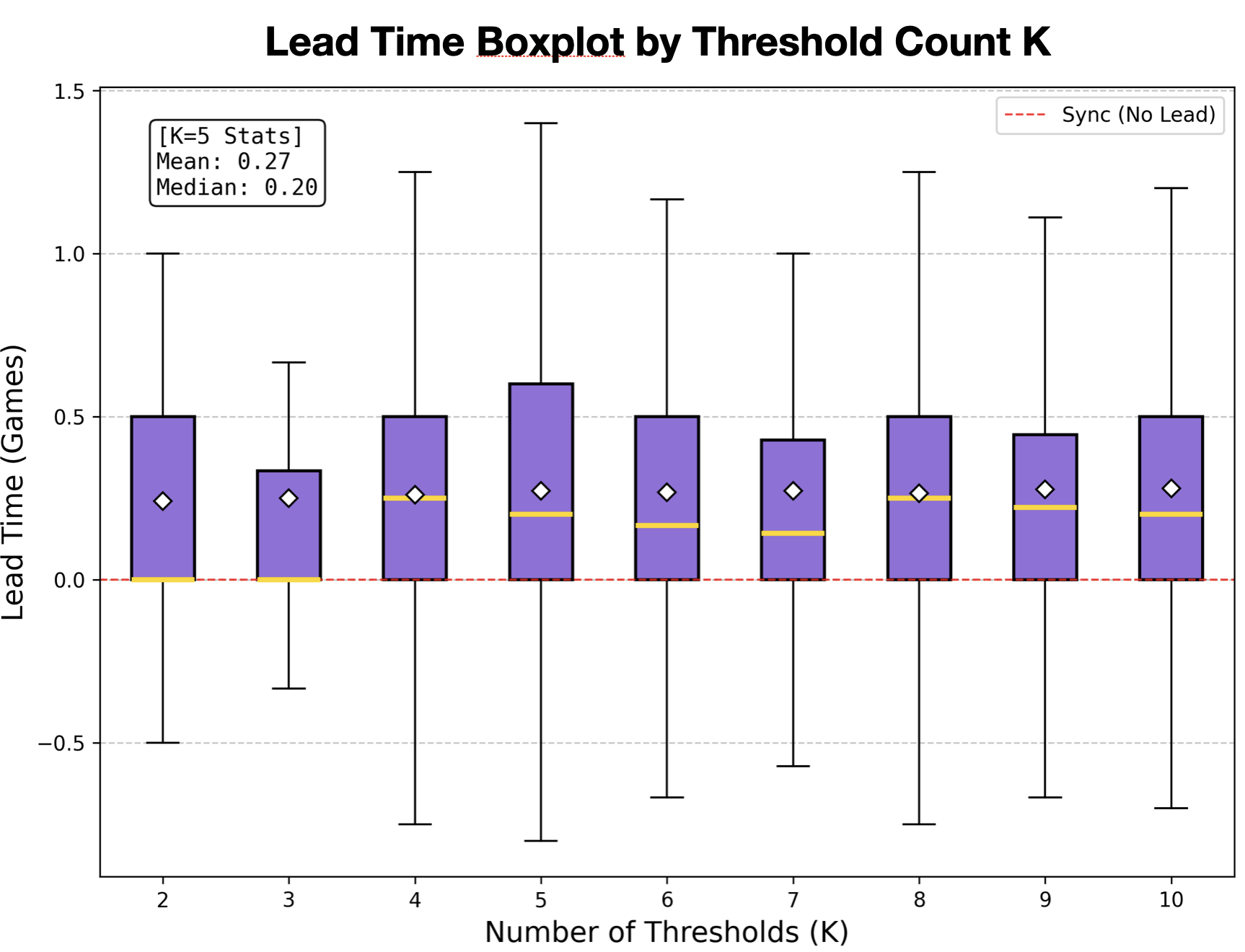}
    \caption{The result of Average Lead Time (ALT)}
    \label{fig:alt}
\end{figure}

\subsection{Predictive Power: Standard IC Analysis}

While Variant IC measures alignment with the latent signal, Standard IC evaluates practical predictive power of $\Delta$ against realized future rating changes. Fig.~\ref{fig:ick} displays the distribution of Standard IC values across different prediction horizons ($K$).

When the horizon extends to $K \ge 2$, the mean Standard IC values rise to approximately $0.030$ and stabilize, with the medians consistently surpassing the significance threshold of $0.02$ (marked by the green dashed line). At $K=1$, the mean Standard IC is near zero ($0.002$), as the outcome of a single game is binary and highly stochastic, dominated by noise that obscures the skill trend. The results imply that the accumulated correction term $\Delta$ is a significant predictor of future rating variations over a multi-game horizon, validating its forward-looking capability.

\subsection{Temporal Advantage: Average Lead Time}

Finally, we quantify the speed of adaptation using the Average Lead Time (ALT), which measures how much earlier DD-Elo reaches new rating milestones than standard Elo. Fig.~\ref{fig:alt} presents the distribution of ALT values calculated across varying resolution thresholds ($K$).

Across all thresholds, the distribution of ALT is positively skewed, with a global mean lead time of $0.28$ games and a median of $0.20$ games. The positive values across the majority of the distribution (indicated by the box positions relative to the zero line) demonstrate that DD-Elo consistently reaches new rating milestones earlier than the standard Elo system. While the lead time of a fraction of a game may appear small, in the context of high-frequency rating updates, this represents a consistent temporal advantage, allowing the system to reflect a player's true strength with reduced lag.

\section{Discussion}

Although focused on chess, the core mechanism of DD-Elo, namely diffusion-based aggregation of intra-game evidence, readily extends to other domains and rating systems. The central requirement is simply the availability of sequential, move-level performance signals. In perfect-information games such as Go or shogi \cite{silver2016mastering,silver2018general}, existing engines provide analogous evaluations. For imperfect-information games such as poker, probabilistic signals derived from counterfactual regret \cite{zinkevich2007regret,lockhart2019computing} can serve as the drift evidence.

Furthermore, our approach is orthogonal to the specific baseline rating framework. Uncertainty-aware models (e.g., Glicko \cite{glickman1995glicko}, TrueSkill \cite{herbrich2006trueskill,minka2018trueskill}) can seamlessly integrate this diffusion-derived signal. Crucially, established in Section \ref{TA}, any rating system adopting an additive, bounded update form $R_{t+1} = R_t + \eta_t,|\eta_t| \le C$ can incorporate our algorithm. This guarantees that inter-game memory and intra-game corrections can be layered onto existing systems while strictly preserving their theoretical stability and bounded-deviation properties.

\section{Conclusion}

In this paper, we propose DD-Elo, a drift–diffusion–based extension of the Elo rating system that addresses the intrinsic response lag of outcome-centric skill assessment by incorporating move-level performance information. Extensive experiments on large-scale real-world chess data demonstrate that DD-Elo adapts to non-stationary skill changes significantly faster than Elo, while preserving bounded deviation, directional accuracy, and long-term alignment with the original rating system. The proposed framework provides a general and explainable approach for enhancing skill rating systems and may be extended to other games, performance modalities, and rating algorithms that require fast yet stable adaptation.

\vspace{0.1cm}
\noindent\textbf{Acknowledgments:}
This work was supported by the Brain Science and Brain-like Intelligence Technology-National Science and Technology Major Project (grant no. 2021ZD0203705) to T.Y.

\bibliographystyle{IEEEtran}
\bibliography{references}

\end{document}